\let\epsilon\varepsilon
\let\phi\varphi
\def\Ilim{I_{\operatorname{lim}}}
\def\Ia{{}^aI}
\def\I{\mathbb I}
\def\N{\mathbb N}
\def\Z{\mathbb Z}
\def\cX{\mathcal X}
\def\cY{\mathcal Y}
\def\cA{\mathcal A}
\def\S{\mathcal S}
\def\F{\mathcal F}
\def\E{\mathbb E}
\def\argmin{\operatorname{argmin}}
\def\as{\text{ a.s.}}
\def\Ilim{I_{\operatorname{lim}}}
 \newtheorem*{theorem*}{Theorem}
 \newtheorem{theorem}{Theorem}
 \newtheorem{lemma}{Lemma}
\newtheorem{proposition}{Proposition}
\newtheorem{corollary}{Corollary}
\newtheorem{definition}{Definition}
\theoremstyle{definition}
\title{Unsupervised model-free representation learning}
\author{Daniil Ryabko
  \thanks{Parts of this paper were presented at conferences ISIT'13 and ALT'13; this paper contains new results and corrections.}
}
\begin{document}
\maketitle
\begin{abstract}
Numerous control and learning problems face the situation where sequences of  high-dimensional highly dependent data 
are available but no or little feedback is provided to the learner, which makes any inference rather challenging.
To address this challenge, we formulate the following problem.  Given a series of observations  $X_0,\dots,X_n$ coming from a large (high-dimensional) space $\cX$,
find a representation function $f$ mapping $\cX$ to a  finite space
$\cY$ such that  the series  $f(X_0),\dots,f(X_n)$ preserves as much information as possible  about the original time-series
dependence in $X_0,\dots,X_n$.   We show that, for stationary time series, the function $f$ can be selected as the one maximizing
a certain information criterion that we call time-series information.
Some properties of this functions are investigated, including its uniqueness and consistency of its empirical estimates.
 Implications for the   problem of optimal control are presented. 
\end{abstract}

\section{Introduction}
In many learning and control problems one has to deal with the situation where the input data 
is high-dimensional and abundant, but the feedback for the learning algorithm is scarce or absent.
In such situations, finding the right {\em representation} of the data can be the key to solving the problem.
The focus of this work is on problems in which all or a large significant part of the relevant information is 
in the time-series dependence of the process. This is the case in many applications, starting with such classical ones as speech or hand-written
text recognition, and, more generally, including control and learning problems in which the input is a stream of sensor data of
an agent interacting with its environment.  

Thus, assume first that we are given a  stationary sequence $X_0,\dots,X_n,\dots$ where $X_i$ belong to a large (continuous, high-dimensional) space $\cX$.
For the moment, assume that the problem is non-interactive (the control part is introduced later).
We are looking  for a representation $f(X_0),\dots,f(X_n),\dots$ where $f(X_i)$  belong to a small, finite space 
 $\cY$. The representation should be such as to preserve most of the information about time-series dependence in the original sequence. 
To formalize this goal, let us first consider the following ``{\em ideal}'' situation. There exists a function $f:\cX\to\cY$ such 
that  each random variable $X_i$ is {\em independent of the rest of the sequence $X_0,\dots,X_{i-1},X_{i+1},\dots,X_n,\dots$    given} $f(X_i)$  (for each $i,n\in\N$).
That is, all the time-series dependence is in the sequence  $(f(X_i))_{i\in\N}$ %
and, given this sequence, the original sequence $(X_i)_{i\in\N}$ %
can be considered redundant, in the sense that $X_i$ are conditionally independent.  In this case we say that {\em $(X_i)_{i\in\N}$ are  
conditionally independent given $(f(X_i))_{i\in\N}$.}
We can show that in this ``ideal'' situation the function $f$ maximizes the following information criterion 
\begin{equation}\label{eq:tsi}
 I_\infty(f):=h(f(X_0))- \sum_{k=1}^\infty w_k h_k (f(X)),
\end{equation}
where $h_k(f(X))$ is the order-$k$  Shannon entropy  rate  of the (stationary) time series  $(f(X_i))_{i\in\N}$ (see, e.g.\ \cite{Cover:91} or the next section for definitions), and $w_k$ 
are summable real weights: here we let $w_k:=1/(k(k+1))$. %
This means that for any other function $g:\cX\to\cY$ we have $I_\infty(f)\ge I_\infty(g)$, with equality if and only $(X_i)_{i\in\N}$ are also 
conditionally independent given $(g(X_i))_{i\in\N}$. 

This allows us to pass to the {\em non-ideal} situation, in which there is no function $f$ that satisfies the conditional independence criterion. Given 
a set of functions mapping $\cX$ to $\cY$ consider a function $f$ that maximizes~(\ref{eq:tsi}).
Such a function  can be said to preserve the most of time-series dependence of the original time series $(X_i)_{i\in\N}$ (as opposed to the ideal case, 
in which such a function $f$ preserves all of the time-series dependence).

For a given function $f$, the quantity~(\ref{eq:tsi})  can be estimated empirically. %
Moreover, one can show that, under certain conditions, it is possible to estimate~(\ref{eq:tsi}) {\em uniformly} over a set $\mathcal F$ of functions $f:\cX\to\cY$.
Importantly, the estimation can be carried out without estimating the distribution of the original time series $(X_i)_{i\in\N}$.

Of particular interest, especially in control problems, is the case where the time series $(X_i)_{i\in\N}$ form a Markov process.
In this case, in the ``ideal'' situation (when $(X_i)_{i\in\N}$ are  
conditionally independent given $(f(X_i))_{i\in\N}$) one can show that the process $(f(X_i))_{i\in\N}$
is also Markov, and $I_\infty(f)=I_1(f):= h(f(X_0))- h(f(X_1)|f(X_0))$; here   $h(f(X_1)|f(X_0))$ is the conditional Shannon entropy 
of $f(X_1)$ given $f(X_0)$.

Next, assume that at each time step $i$ we are allowed to take an {\em action} $A_i$, and the next observation $X_{i+1}$ depends not only on $X_0,\dots,X_i$ but
also on the actions $A_1,\dots,A_i$. Thus, we are considering the {\em control} problem, and the time series $(X_i)_{i\in\N}$ do not have to be stationary any more. 
In this situation, the time-series information becomes dependent on the policy $\pi$ of the learner, that is, on the way the actions are chosen. 
However, we can show that, in the Markov case, under some mild connectivity conditions, if the ideal representation function $f$ exists, then to find it it is  enough to consider just one random policy that takes
all actions with non-zero probability.
This means that one can find the representation function $f$ while executing a random policy, without any feedback from the environment (i.e., without rewards). One can 
then use this same representation to solve the target control problem more easily. Finding a representation function without access to rewards is especially useful in situations where rewards are costly to obtain. For example, one can imagine situations where the dynamics of the environment can be simulated, but the rewards cannot.

\subsection{Examples}
Next we consider some examples of problems where all or most of the information one would like to learn is contained in the time-series dependence.

\subsubsection{Example 1: type-written text}
 Let the sequence $X_i$, $i=1..N$ be a type-written text in English. Here each $X_i$ is the image of a letter printed on a typewriter, and together the sequence represents a text. The text is readable, so each image $X_i$ uniquely determines a letter; let $f:\cX\to\cY$ be this mapping, where $\cX$ is the space of images of letters and $\cY$ is the finite space of all  letters of the English language. Note that $X_i$ are conditionally independent given $f(X_i)$. The text itself is not i.i.d.\ nor Markov, but can be thought of as being stationary. Here it is easy to picture the conditional distribution of $X$ given $f(X)$: there are $|\cY|$ many mutually singular distributions, each corresponding to a letter on the typewriter. By pressing, say, the key  ``A'' one generates an image corresponding to that letter, independently of all other images of the letter ``A'' before or after it; the dependence between this image and the rest of the images in the text is only via the labels, that is, the dependence is in the text and not in the images themselves. Thus we have described an example of the {\em ideal} situation formulated above.  If instead of a type-written text we consider a hand-written one, then the conditional independence assumption does not hold, because the way one writes a letter depends on the preceding and the following letters. Thus, we are no longer in the ideal situation; we can get back to it by considering a larger space $\cY$, for example that of all words, so that $X_i$ is an image of a hand-written word and $Y_i$ is the word itself. A somewhat middle-ground formulation in which we are close to the ideal situation is to consider the space of all pairs or triplets of letters.

\subsubsection{Example 2: online chess game}
  A player plays a game of chess on his computer. After each move, he takes a screenshot of the part of the screen that contains the board. These screenshots constitute  the sequence $X_i$. This is a control problem, as the player can take actions (chess moves) $A_i$, to which the environment reacts (the other player makes moves as well). It can be assumed to be a (rewardless) Markov decision process (MDP), meaning that  (the next state) $X_{i+1}$ only depends on (the previous state) $X_i$ and action $A_i$. The function $f$ maps the space $\cX$ of screenshots to the finite space $\cY$ of all strings that describe the positions of all the figures on the chess board (considering some canonical representation of the board). Here again $X_i$ are conditionally independent given $f(X_i)$. To add some randomness to the problem (the screenshots may be thought of as deterministic), one can consider, instead of the screenshots, photographs of the computer screen. This would still preserve the conditional independence property. However, if we consider a physical version of the game, where $X_i$ are photographs of the physical board, then the conditional indepedence property breaks (or holds only approximately). Indeed, the position of each of the figures that does not make a move  remains the same, and as the figure is not placed on a very precise position on the board, there is some game-unrelated randomness that is preserved from one move to the next. Note that if someone does not know anything about the game of chess, and does not receive any feedback (not even the binary ``lose/win'' signal in the end), then there is no way of knowing that this position information is irrelevant. Since we are considering the control problem without rewards, this lack of conditional independence becomes important: we cannot throw away the time-series information (here, contained in the position of the figure inside its cell on the board) without knowing that it is irrelevant for rewards.

\subsection{Prior work}
Learning representations, feature learning, model learning, as well as model and feature selection, are different variants
and different names of the same general problem: making the data more amenable to learning.  
From the vast literature available on these problems we only mention a few that are somehow related to the approach taken in this work.
First, note that   in our ``ideal'' (conditional independence) case, if we  further assume that   $(X_i)$ form a Markov chain, 
then we get a special case of  Hidden Markov models (HMM)~\cite{Rabiner:86}, with (unobserved) $f(X_i)$ being hidden states. Indeed, as it was mentioned, in this  case $f(X_i)$ form a Markov 
chain (Section~\ref{s:mark}), and thus can be considered hidden states; the dependence between  $X_i$ and $f(X_i)$ is deterministic, as opposed to stochastic in HMM,
so we get a special case. Thus, the general case (non-ideal situation, $X_i$ are not necessarily Markov) can be considered a generalization of HMMs.
A related approach to finding representations in HMMs is that of  \cite{Hutter:10} (see also \cite{Hutter:09c}).
The setting of \cite{Hutter:10}  can be related to our setting in Sections~\ref{s:mark},~\ref{s:cont}. Specifically, \cite{Hutter:10} considers 
environments generated by HMMs in which the hidden states are deterministic functions of the observed variables. %
 The approach of~\cite{Hutter:10} is then to maximize a penalized likelihood function, where the penalty is 
for larger state spaces. Consistency results are obtained for the case of finite or countably infinite sets of maps (representation functions), which are given
by so-called finite-state machines of bounded memory one of which is the true environment.    %

From a different perspective, if $X_i$ are independent and identically distributed and, instead of the time-series dependence (which, of course, is absent in this case), we want
to preserve as much as possible  of the information about another sequence of variables (labels) $Y_1,\dots,Y_n$, then one can arrive 
at the information bottleneck method~\cite{Tishby:99}.  The information bottleneck method  can, in turn, be seen as a generalization of the rate-distortion theory of Shannon \cite{Shannon:59}.
While the classical formulation of the information bottleneck method deals with i.i.d.\ data, the concept had been applied to
dependent data as well. Thus, applied to dynamical systems,  the information bottleneck method can be formulated \cite{Creutzig:09} as follows: minimize $I(\text{past; representation}) -\beta I(\text{representation; future})$, where
$\beta$ is a parameter. A related idea is that of causal states \cite{Shalizi:01}: two histories belong to the same causal state if and only if they give the same conditional distribution over futures.
What distinguishes the  approach of this work from those described is that we never have to consider the probability distribution of the input time  series $X_i$ directly~--- only 
through the distribution of the representations $f(X_i)$. Thus, modelling or estimating $X_i$ is not required; this is particularly important for  empirical estimates.   

For the control problem, to relate the proposed approach to others, first  observe that in the
case of an MDP, in the ``ideal'' scenario, that is, in the case when there exists a function $f:\cX\to\cY$ such that $(X_i)_{i\in\N}$ are conditionally 
independent given $(f(X_i))_{i\in\N}$, then for any states $x,x'\in\cX$ for which $f(x)=f(x')$ all the transition probabilities are the same.
In other words, states $x,x'\in\cX$ for which $f(x)=f(x')$ are equivalent in a very strong sense, and the function $f$ can be viewed as state aggregation. Generalizations of this
equivalence and aggregation (in the presence of rewards or costs) are studied in the  bisimulation and homomorphism literature \cite{Givan:03,Ferns:06,Taylor:09,Ravindran:03}.
The main difference of our approach (besides the absence of rewards) is in the treatment of approximate (non-ideal) cases and in the way we propose to find the representation
(aggregation) functions. In bisimulation this is approached via a metric on the state space defined using a distance between the transition (and reward) probability distributions,
which then has to be estimated \cite{Ferns:06,Taylor:09}. %
In our approach, all that has to be estimated concerns the representations $f(X)$, rather than the observations (states) $X$ themselves.

In the context of  reinforcement learning with rewards, a related problem  is that of finding a (concise) representation 
of the input space such that the resulting process on representations is Markovian \cite{Maillard:11,Maillard:13}. 

It should also be noted that the conditional independence property  has been previously studied in a different context (classification) in \cite{Ryabko:06condiid}.
The latter work shows that if the objects $(X_i)_{i\in\N}$ are conditionally independent given the labels $(Y_i)_{i\in\N}$ then, effectively,
one can use classification methods developed to work under the assumption of i.i.d.\ object-label pairs. Thus, if one is given some examples of object-label  pairs $(X,f(X))$, then one can use both classification methods and time-series information to learn the function~$f$. In other words, \cite{Ryabko:06condiid} shows that time-series information does not hurt: the methods proven to work only when there is none (i.i.d.\ object-label pairs) are, in fact, applicable to a wider range of situations.  On the other hand, the results of the present work show that time-series information can be very useful, even when no object-label examples are given. 

\subsection{Organization}
The rest of the paper is organized as follows. Section~\ref{s:pr} introduces some notation and definitions. Section~\ref{s:main} 
introduces the model and gives the main results concerning representation functions for stationary time series. Section~\ref{s:mark} considers the special case of (stationary) Markov 
chains; Section~\ref{s:uni} 
presents results on uniform empirical approximation of time-series information. Finally,  Section~\ref{s:cont} extends the model and results to the control problem. Some longer proofs are deferred to the Appendix. %

\section{Preliminaries}\label{s:pr}
Let $(\cX,\mathcal F_\cX)$ and $(\cY,\mathcal F_\cY)$ be probability spaces. Here we assume that  $\cY$ is finite; extensions to infinite spaces $\cY$  are  left for future work. The idea is that $\cY$, as a representation space,  should be ``smaller'' than $\cX$, which, in turn, is thought of as ``large'' (e.g. high-dimensional Euclidean space).  The space $(\cX,\mathcal F_\cX)$ is assumed to be such that extensions to time-series distributions and regular conditional probabilities are well-defined. A sufficient condition for this is that $(\cX,\mathcal F_\cX)$ possesses a standard basis, e.g., is a Polish space \cite{Gray:88}. 
When speaking about conditional probability distributions, for random variables $X$ and $Y$ the notation $P_{X|Y}(X|Y)$ refers to a regular conditional probability measure (e.g., \cite[Section 5.8]{Gray:88}). The equality of any such two conditional probabilities is always understood in the almost-sure sense, that is, for almost all values of the random variables under the condition.  Moreover, we shall omit the subscript in the expressions like $P_{X|Y}(X|Y)$, i.e.\ simply use $P(X|Y)$, somewhat abusing the notation which may also refer to the probability of taking certain value, as in $P(X=x)$; however, the distinction will be clear from the context.

The upper-case letters, such as $X,Y,$ etc.\ are reserved for random variables, while the lower-case letters $x,y,$ etc.\ for their realisations.

Time-series (or process) distributions  are probability measures on the space $(\cX^\N,\mathcal F_{\N})$ of one-way infinite sequences (where $\mathcal F_{\N}$ is the Borel
 sigma-algebra of $\cX^\N$).
We use the abbreviation $X_{0..k}$ for $X_0,\dots,X_k$. 
A distribution  $\rho$ is stationary if $\rho(X_{0..k}\in A)=\rho(X_{n..n+k}\in A)$ 
for all $A\in\mathcal F_{\cX^k}$, $k,n\in\N$ (with $\mathcal F_{\cX^k}$ being the sigma-algebra of $\cX^k$).

A stationary distribution on $\cX^\N$ can be uniquely extended to a distribution on $\cX^\Z$ (that is, to a time series $\dots,X_{-1},X_0,X_1,\dots$);
we will assume such an extension whenever necessary.

The following notation is used for entropies and information; see, e.g., \cite{Cover:91} for a thorough exposition.
For a discrete random variable $Z$ denote $h(Z)$ its Shannon entropy $h(Z):=-\sum_{z}P(Z=z)\log P(Z=z)=-\E\log P(Z)$ where $0\log0:=0$. For a pair of random variables $Z_1,Z_2$, their joint entropy is simply $h(Z_1,Z_2):=-\sum_{z_1,z_2}P(Z_1=z_1,Z_2=z_2)\log P(Z_1=z_1,Z_2=z_2)$ and the conditional entropy is defined as $h(Z_2|Z_1):=-\sum_{z_1}P(Z_1=z_1)\sum_{z_2}P(Z_2=z_2|Z_1=z_1)\log P(Z_2=z_2|Z_1=z_1)=-\E\log P(Z_2|Z_1)$.
We use the notation  $h_0(f)$ for the entropy of $f(X_0)$ %
\begin{equation}\label{eq:ent}
h_0(f):=h(f(X_0)),
\end{equation}
 and $h_k(f)$ for the $k$-order entropy  rate of $(f(X_i))_{i\in\N}$
\begin{equation}\label{eq:entk}
h_k(f):= h(f(X_{k})|f(X_0),\dots,f(X_{k-1})).
\end{equation}
For stationary time series $(f(X_i))_{i\in\N}$ the (limit) entropy rate, which always exists (see, e.g.,\cite{Cover:91}) is defined as 
$$
h_\infty(f):=\lim_{k\to\infty} h_k(f).
$$
For random variables $Y_1,Y_2$, the mutual information $I(Y_1;Y_2)$ is defined as 
$
I(Y_1;Y_2):= h(Y_1)-h(Y_1|Y_2).
$
For stationary time series $Y_1,Y_2,\dots$ we have, by stationarity, the following simple identity which is used often in this text: $$I(Y_1;Y_2)= h(Y_2)-h(Y_1|Y_2).$$
\section{Time-series information for stationary distributions}\label{s:main}
This section describes the main results concerning representation functions for stationary time series. 
We first introduce the ``ideal'' situation in which $(X_i)_{i\in\N}$ are conditionally independent given $(f(X_i))_{i\in\N}$ for some function $f:\cX\to\cY$, 
and define time-series information.
We then show that under this condition the function $f$ maximizes time-series information. 
\begin{definition}[conditional independence given labels]\label{d:ciid}
 We say that $(X_i)_{i\in\N}$ are conditionally independent given $(f(X_i))_{i\in\N}$, if
 for   all $n,k,$ and all $i_1,\dots,i_k\ne n$,  $X_n$ is independent of  $X_{i_1},\dots,X_{i_k}$ given $f(X_n)$:  %
 \begin{equation}\label{eq:ciid}
    P(X_n|f(X_n),X_{i_1},\dots,X_{i_k})=P(X_n|f(X_n))\as
 \end{equation}
\end{definition}

One can think of $f(X_i)$ as a sequence of {\em representations}: these representations preserve all the information about time-series dependence that is 
present in the original sequence $(X_i)_{i\in\N}$: indeed, the latter variables become independent given the representations.

The following simple implications of Definition~\ref{d:ciid} will be used repeatedly in the proofs. 
\begin{lemma}\label{l:prop}
 Assume that $(X_i)_{i\in\N}$ are conditionally independent given $(f(X_i))_{i\in\N}$, and let $g_i,g_j'$ $(i,j\in\N)$ be any (measurable) functions from $\cX$ to $\cY$. 
Then, for all different  values of the indices, we have
\begin{subequations}
\begin{multline}\label{eq:mag}
    P(X_{i_1},\dots,X_{i_k}| X_{j_1},\dots,X_{j_m},g_l(X_l))
  \\ =P(X_{i_1},\dots,X_{i_k}| f(X_{j_1}),\dots,f(X_{j_m}),g_l(X_l))\as
\end{multline}
 \begin{multline}\label{eq:mag2} 
  P(X_{i_1},\dots,X_{i_k}|X_{j_1},f(X_{j_1}),\dots,X_{j_{m}},f(X_{j_{m}}),g_l(X_l))\\= P(X_{i_1},\dots,X_{i_k}|f(X_{j_1}),\dots,f(X_{j_{m}}),g_l(X_l))\as,
 \end{multline}
 \begin{multline}\label{eq:magic} 
  h(g_1'(X_{i_1}),\dots,g_k'(X_{i_k})
   \\|g_1(X_{j_1}),f(X_{j_1}),\dots,g_m(X_{j_{m}}),f(X_{j_{m}}),g_l(X_l))= \\h(g_1'(X_{i_1}),\dots,g_k'(X_{i_k})|f(X_{j_1}),\dots,f(X_{j_{m}}),g_l(X_l))\as,
 \end{multline}
\begin{multline}\label{eq:magic2}
  h(g_1'(X_{i_1}),\dots,g_k'(X_{i_k})|f(X_{j_1}),\dots,f(X_{j_{m}}),g_l(X_l))\\\le h(g_1'(X_{i_1}),\dots,g_k'(X_{i_k})\\|g_1(X_{j_1}),\dots,g_m(X_{j_{m}}),g_l(X_l))\as,
\end{multline}
 \begin{multline}  \label{eq:magic4}
  I(g_1'(X_{i_1}),\dots,g_k'(X_{i_k});\\ 
g_1(X_{j_1}),f(X_{j_1}),\dots,g_m(X_{j_{m}}),f(X_{j_{m}}),g_l(X_l))=\\ I(g_1'(X_{i_1}),\dots,g_k'(X_{i_k});f(X_{j_1}),\dots,f(X_{j_{m}}),g_l(X_l))\as,
 \end{multline}	
\begin{multline}\label{eq:magic3}
  I(g_1'(X_{i_1}),\dots,g_k'(X_{i_k});f(X_{j_1}),\dots,f(X_{j_{m}}),g_l(X_l))\\\ge I(g_1'(X_{i_1}),\dots,g_k'(X_{i_k});\\g_1(X_{j_1}),\dots,g_m(X_{j_{m}}),g_l(X_l))\as
\end{multline}
\end{subequations}

\end{lemma}
\begin{proof}
 The first statement is simply the symmetry of conditional independence. We show it for one index on each side, as the general case is analogous: the extra random variables (including $g_l(X_l)$ in the condition) are pure spectators. Applying the Bayes formula and the conditional independence,  w.p.~1 we have the following chain of equalities for conditional distributions:
\begin{multline*}
P(X_{i}| X_{j})=P(X_{i}| X_{j},f(X_{j}))
\\
= \frac{P(X_{j}|X_{i},f(X_{j}))P(X_{i},f(X_{j}))}{P(X_{j},f(X_{j}))}\\
=\frac{P(X_{j}|f(X_{j}))P(X_{i},f(X_{j}))}{P(X_{j}|f(X_{j}))P(f(X_j))}=P(X_{i}|f(X_{j})).
\end{multline*}

For the rest of the statements, we have \eqref{eq:mag}$\Rightarrow$\eqref{eq:mag2}$\Rightarrow$\eqref{eq:magic}$\Rightarrow$\eqref{eq:magic2}; \eqref{eq:magic}  $ \Rightarrow$\eqref{eq:magic4}; \eqref{eq:magic2}  $ \Rightarrow$\eqref{eq:magic3}.
\end{proof}

One can show that a function $f$ which gives the property of conditional independence, if it exists, is unique up to permutations and up to 0-probability sets,
provided $\cY$ is the smallest set for which such a function exists.

\begin{definition}[Minimal representation set] We say that $\cY=\{1,\dots,K\}$ is the minimal representation set if there exists a function $f:\cX\to\cY$ such that
$(X_i)_{i\in\N}$ are conditionally independent given $(f(X_i))_{i\in\N}$, and for every $K'<K$ 
 there is no function $f:\cX\to\cY':=\{1,\dots,K'\}$ such that $(X_i)_{i\in\N}$ are conditionally independent given $(f(X_i))_{i\in\N}$. 
\end{definition}

\begin{proposition}[Uniqueness of representation]\label{th:one} Let  $\cY$ be the minimal representation set and let $f,g:\cX\to\cY$ be such that $(X_i)_{i\in\N}$ are conditionally independent given either  $(f(X_i))_{i\in\N}$ or $(g(X_i))_{i\in\N}$. Then there exists a permutation $\pi:\cY\to\cY$ such that $f(X_i)=\pi(g(X_i))$ a.s. 
\end{proposition}
\begin{proof}
Here it will be convenient for us to reverse the order of random variables in the definition of conditional independence, which we can do since independence is symmetric: 
$  P(X_{i_1},\dots,X_{i_k}|f(X_n),X_n)=P(X_{i_1},\dots,X_{i_k}|f(X_n)) $ a.s., and likewise for $g$ (see~\eqref{eq:mag2}).

Let $X_{\backslash i}$ denote the random variables  $(X_t)_{t\in\N, t\ne i}$. Observe that, if the distributions $P(X_{\backslash i}|f(X_i)=a)$ and $P(X_{\backslash i}|f(X_i)=b)$ coincide for some values $a,b\in\mathcal Y$ for all $i$ and the conditions have non-zero probability at least for some $i$, then we must have $a=b$, since otherwise $\mathcal Y$ would not be the minimal representation set (we could merge $a$ and $b$). The same holds for the function $g$. Likewise, for every $a\in\mathcal Y$ we must have $P(f(X_i)=a)>0$ at least for some $i$ (since otherwise we could merge $a$ with an arbitrary letter), and similarly for $g$. 

Therefore, to prove the proposition, it is enough 
to show that for every $a\in\mathcal Y$ there exists $b\in\mathcal Y$ such that $P(X_{\backslash i}|f(X_i)=a)=P(X_{\backslash i}|g(X_i)=b)$ for all $i\in\N$ for which the condition has non-zero probability simultaneously; this relation would establish the required permutation $\pi$.
To do so, consider $a,b\in\cY$ and $A=f^{-1}(a)$, $B=g^{-1}(b)$.  Note  that for any measurable $T\subset \cX^k$ and any different values of the indexes $i,j_1,\dots,j_k$, conditional independence   implies, via~\eqref{eq:mag2},  that
$P((X_{j_1},\dots,X_{j_k})\in T|X_i=s, f(X_i)=a)$ is a.s.\ constant in $s$. Similarly, $P((X_{j_1},\dots,X_{j_k})\in T|X_i=s, g(X_i)=b)$  is a.s.\ constant in $s$.
If $P(X_i\in A\cap B)>0$ for some $i\in\N$ then these constants should coincide, which means that the distributions  $P(X_{\backslash i}|f(X_i)=a)$ and $P(X_{\backslash i}|g(X_i)=b)$ are a.s.\ equal. This  implies the statement. 
\end{proof}

The main quantity of interest is $I_\infty(f)$, which is a formalization of the notion of time-series information. It quantifies
the amount of time-series dependence in the series $(X_i)_{i\in\N}$.

\begin{definition}\label{d:iinf}
For a time series $f(X_0),\dots,f(X_n),\dots$, define its $k$th order time-series information as
\begin{equation}\label{eq:tik}
  I_k(f):= h_0(f)- h_k(f) %
\end{equation}
and its time-series information as
\begin{equation}\label{eq:ti}
  I_\infty(f):= \sum_{k=1}^\infty w_k I_k(f)=h_0(f)-\sum_{k=1}^\infty w_k  h_k(f),
\end{equation}
where we set $w_k:=1/(k(k+1))$ (however, any positive weights that sum to 1 may be chosen).
\end{definition}

For stationary time series, from this definition we immediately obtain the following identity
 \begin{equation}\label{eq:inf}
   I_k(f)=I(f(X_k);f(X_{1}),\dots,f(X_{k-1})).
 \end{equation}
The following theorem is the main result concerning representations of stationary time series. 
\begin{theorem}\label{t:m}
 Let $(X_i)_{i\in\N}$ be a stationary time series, and let $f:\cX\to\cY$ be such that $(X_i)_{i\in\N}$  are conditionally independent given $(f(X_i))_{i\in\N}$.
Then for any $g:\cX\to\cY$ we have $I_\infty(f)\ge I_\infty(g)$, with equality if and only if  $(X_i)_{i\in\N}$ are conditionally independent given $(g(X_i))_{i\in\N}$.
\end{theorem}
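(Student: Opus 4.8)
The plan is to reduce the whole statement, via Lemma~\ref{th:inf}, to a comparison of mutual informations between labels, and then to isolate the exact information defect that quantifies the failure of conditional independence. Throughout write $Y_i:=f(X_i)$ and $Z_i:=g(X_i)$, let $\mathbf Z=(Z_i)_{i\in\Z}$ be the whole label sequence, and abbreviate the past as $X_{<0}:=(X_{-1},X_{-2},\dots)$, with $Y_{<0},Z_{<0}$ defined analogously. By Lemma~\ref{th:inf} we have $I_\infty(f)=I(Y_0;Y_{<0})$ and $I_\infty(g)=I(Z_0;Z_{<0})$, so it suffices to compare these two quantities.

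First I would reformulate Definition~\ref{d:ciid}: conditional independence of $(X_i)$ given $(f(X_i))$ is equivalent to the emission factorization $P(X_{m..M}\mid\mathbf Y)=\prod_{i=m}^{M}P(X_i\mid Y_i)$ for every finite block $[m,M]$. One direction is the chain rule together with~(\ref{eq:ciid}); the other follows because such a factorization forces $P(X_n\mid\mathbf Y,(X_j)_{j\ne n})=P(X_n\mid Y_n)$. Using the factorization for $f$ I would then establish the identity
\begin{equation*}
  I_\infty(f)=I(Y_0;Y_{<0})=I(X_0;X_{<0}),
\end{equation*}
which says that a conditionally independent $f$ retains \emph{all} the information the present carries about the past. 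Indeed, since $Y_0=f(X_0)$ is a function of $X_0$, the chain rule gives $I(X_0;X_{<0})=I(Y_0;X_{<0})+I(X_0;X_{<0}\mid Y_0)$, and the last term vanishes by~(\ref{eq:ciid}); and the factorization yields $P(Y_0\mid Y_{<0},X_{<0})=P(Y_0\mid Y_{<0})$, whence $I(Y_0;X_{<0})=I(Y_0;Y_{<0})$.

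The inequality, together with a clean characterization of the equality case, then drops out of a single decomposition. Expanding $I(X_0;X_{<0})$ by the chain rule twice, and using that $Z_0=g(X_0)$ and $Z_{<0}=g(X_{<0})$ are functions, gives
\begin{equation*}
  I_\infty(f)-I_\infty(g)=I(X_0;X_{<0}\mid Z_0)+I(Z_0;X_{<0}\mid Z_{<0}),
\end{equation*}
a sum of two nonnegative terms; hence $I_\infty(f)\ge I_\infty(g)$, with equality iff both (1)~$X_0\perp X_{<0}\mid Z_0$ and (2)~$Z_0\perp X_{<0}\mid Z_{<0}$ hold. The ``if'' part of the theorem (conditional independence for $g$ implies equality) is just the displayed identity of the previous paragraph applied to $g$ in place of $f$; it remains to prove the ``only if'' part, i.e.\ that (1)--(2) force conditional independence.

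The main obstacle is exactly this converse: recovering the \emph{two-sided} conditional independence of Definition~\ref{d:ciid} from the \emph{one-sided} conditions (1)--(2). The gap is genuine --- one can exhibit a stationary process with $X_n\perp X_{<n}\mid Z_n$ for every $n$ but $X_n\not\perp X_{>n}\mid Z_n$ --- so data-processing equality alone is insufficient, and condition (2) must be used as well. My plan is to combine (1) and (2) at all shifts (by stationarity) through the one-step identity
\begin{equation*}
  P(X_n\mid X_{<n})=P(Z_n\mid Z_{<n})\,P(X_n\mid Z_n),
\end{equation*}
obtained by writing $P(X_n\mid X_{<n})=P(Z_n\mid X_{<n})\,P(X_n\mid Z_n,X_{<n})$ and applying (2) to the first factor and (1) to the second. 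Telescoping this over a block $[m,M]$ conditioned on the far past gives $P(X_{m..M}\mid X_{<m},Z_{m..M})=\prod_{i=m}^{M}P(X_i\mid Z_i)$; since the right-hand side involves neither $X_{<m}$ nor, by extending the telescoping past $M$, the future labels, averaging out the past observations yields the full factorization $P(X_{m..M}\mid\mathbf Z)=\prod_{i=m}^{M}P(X_i\mid Z_i)$, which by the reformulation is exactly Definition~\ref{d:ciid} for $g$. I expect the measure-theoretic bookkeeping of this telescoping-and-averaging step --- justifying the conditioning on infinite pasts and the passage through all block lengths --- to be the most delicate part of the argument.
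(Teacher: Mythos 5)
Your proposal is correct, and it takes a genuinely different route from the paper's. The paper never leaves the world of finite-order quantities: it introduces the joint representation $(f,g)$, uses the conditional-independence identity $h(f(X_n)\mid f(X_{i_1}),f'(X_{i_1}),\dots)=h(f(X_n)\mid f(X_{i_1}),\dots)$ to prove $I_k(f,g)=I_k(f)$ (whence the inequality), and then, assuming $g$ is not conditionally independent, extracts a strict inequality $I_k(f)>I_k(g)$ for all large $k$ via a case analysis showing that at least one of $h(Y_1\mid Z_0,Y_{-1})>h(Y_1\mid Y_0,Y_{-1})$ and $h(Y_1\mid Z_0)>h(Y_1\mid Y_0)$ must hold. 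You instead work directly at the level of $I_\infty$ via Lemma~\ref{th:inf}, prove the stronger identity $I_\infty(f)=I(X_0;X_{<0})$ (a conditionally independent representation retains all the information the present carries about the past), and obtain the exact defect formula
\begin{equation*}
I_\infty(f)-I_\infty(g)=I(X_0;X_{<0}\mid Z_0)+I(Z_0;X_{<0}\mid Z_{<0}),
\end{equation*}
from which both the inequality and the equality characterization fall out at once. Your route buys a quantitative expression for the gap and, notably, delivers strictness directly for $I_\infty$: the paper proves $I_k(f)>I_k(g)$ for each sufficiently large $k$ and then needs this to survive the limit $k\to\infty$, which is not automatic for strict inequalities, whereas your two nonnegative terms are computed at infinity from the start. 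What the paper's route buys is the finite-$k$ statements ($I_k(f)\ge I_k(g)$ and $I_k(f,g)=I_k(f)$), which are what is actually invoked later in the Markov and empirical-estimation sections. You also correctly isolate the one real difficulty of your approach --- upgrading the two one-sided (past-only) equality conditions to the two-sided conditional independence of Definition~\ref{d:ciid} --- and your telescoping of $P(X_n\mid X_{<n})=P(Z_n\mid Z_{<n})\,P(X_n\mid Z_n)$ into the block factorization $P(X_{m..M}\mid \mathbf{Z})=\prod_{i=m}^{M}P(X_i\mid Z_i)$ is the right mechanism; what remains is the martingale-convergence bookkeeping for conditioning on infinite pasts and the finiteness check $I(X_0;X_{<0})=I(Y_0;Y_{<0})\le\log|\cY|<\infty$ that legitimizes the chain-rule subtractions, both of which you flag and both of which go through.
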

The proof is deferred to the appendix. %

Thus, given a set $\mathcal F$ of representation functions  $f:\cX\to\cY$, the function that is ``closest'' to 
satisfying the conditional independence property given in Definition~\ref{d:ciid} can be defined as   the one that maximizes~(\ref{eq:ti}).
If the set $\mathcal F$ is finite and the time series $(X_i)_{i\in\N}$ are stationary, then it is
 possible to find the function that maximizes~(\ref{eq:ti}) given 
a large enough sample of the time series,   without
knowing anything about its  distribution. 
Indeed, it suffices  to have a consistent estimator for  the entropy
$h_k(f)$, which can %
 be estimated simply using empirical plug-in estimates. 
In practice it is clearly not necessary to compute the infinite sum in~(\ref{eq:ti}), but only as many summands as is computationally feasible and statistically meaningful; computing $l= O(\log n)$ summands (where $n$ is the length of the time-series available) seems reasonable in view of obtaining consistent frequency estimates: using the definition of the weights $w_k=1/(k(k+1))$ we can upper-bond  the error from not computing the rest of the summands by $O(\log(|\cY|/(l+1))$.

\subsection{Alternative formulations}
Theorem~\ref{t:m} can be formulated in a slightly different way without making reference to $I_\infty$ but only to $I_k$. Such an alternative formulation makes clearer the role of $I_k$ and avoids the use of (rather arbitrary) parameters $(w_k)_{k\in\N}$ in the definition of $I_\infty$ (Definition~\ref{d:iinf}). At the same time, it does not yield a specific function to optimize in order to find the representation $f$.
\begin{theorem*}[alternative formulation of Theorem~\ref{t:m}]
 Let $(X_i)_{i\in\N}$ be a stationary time series, and let $f:\cX\to\cY$ be such that $(X_i)_{i\in\N}$  are conditionally independent given $(f(X_i))_{i\in\N}$.
Then for any $g:\cX\to\cY$ and all $k\in\N$ we have $I_k(f)\ge I_k(g)$, where the inequality is strict for at least some $k$ unless  $(X_i)_{i\in\N}$ are conditionally independent given $(g(X_i))_{i\in\N}$.
\end{theorem*}
The proof of Theorem~\ref{t:m} (given in the appendix) carries over to this formulation essentially unchanged.

An alternative way of defining $I_\infty$, which appears attractive, is to replace the sum $\sum_{k=1}^\infty  w_kh_k(f)$ in~\eqref{eq:ti} by the limit $h_\infty(f)$, that is, defining
$$
\Ilim(f):=h_0(f)-h_\infty(f).
$$
This quantity is appealing since it avoids using the weights $w_k$. 
However, a problem arises with that, unlike for $I_\infty$, the inequalities  $I_k(f)<I_k(g)$ for all $k$ cannot be used to conclude directly that
$\Ilim(f)<\Ilim(g)$. This means that one cannot obtain an analogue of Theorem~\ref{t:m} for $\Ilim$ in the same way.

\section{Time-series information for Markov chains}\label{s:mark}
For the control problem, a special role is played by Markov environments; we first look 
at the simplifications gained by making this assumption in the stationary case, before considering the control problem itself in the following.

If the series $(X_i)_{i\in\N}$ form a stationary %
 Markov process then the situation simplifies considerably.
First of all,  if  $(X_i)_{i\in\N}$ are conditionally independent given $(f(X_i))_{i\in\N}$ then $(f(X_i))_{i\in\N}$ also form a stationary %
Markov chain.
Moreover, to find the function that maximizes the time-series information~(\ref{eq:tsi}) it is enough to find the function 
that maximizes a simpler quantity %
$I_1(f)=I(f(X_0);f(X_1))$, as the following theorem shows.

\begin{theorem}\label{th:mark}
Suppose that   $X_i$ form a stationary Markov process  and   $(X_i)_{i\in\N}$ are conditionally independent given $(f(X_i))_{i\in\N}$. Then
\begin{itemize}
 \item[(i)]  $(f(X_i))_{i\in\N}$ also form a stationary Markov chain.
\item[(ii)] In this case $I_\infty(f)$ is the mutual information between $f(X_0)$ and $f(X_1)$:
 \begin{equation}\label{eq:imar}
  I_\infty(f)=I_1(f)=I(f(X_0);f(X_1)),
\end{equation}
 and  for any $g:\cX\to\cY$ we have $I_1(f)\ge I_1(g)$ with equality if and only if  $(X_i)_{i\in\N}$ are conditionally independent given $(g(X_i))_{i\in\N}$.
\end{itemize} 
\end{theorem}
\begin{proof}
We use the notation $Y_i:=f(X_i)$.
For the first statement, observe that
 \begin{multline}
 P(Y_{n+1}|Y_1\dots,Y_n)=P(Y_{n+1}|Y_1,X_1,\dots,Y_n,X_n)\\=P(Y_{n+1}|Y_n,X_n)=P(Y_{n+1}|Y_n),
\end{multline}
where we have used successively~\eqref{eq:mag2},  the Markov property for $(X_i)_{i\in\N}$ and again~\eqref{eq:mag2}. This establishes the Markov property for the process $(Y_i)_{i\in\N}$; its stationarity follows from that of $(X_i)_{i\in\N}$.  %

For the second statement, first note that  $h_k=h_1$, $k\ge1$ for Markov chains, implying~(\ref{eq:imar}). Next, for any $g:\cX\to\cY$ the process $g(X_i)$ is stationary, 
which implies $h_k(g(X))\le h_1(g(X))$, $k\ge1$. Thus, using Theorem~\ref{t:m}, we obtain
 \begin{equation*}
  I_1(f)=I_\infty(f)\ge I_\infty(g)\ge h_0(g)-h_1(g)=I_1(g).
 \end{equation*}
The statement about the case $I_1(f)=I_1(g)$ also follows from Theorem~\ref{t:m}.
\end{proof}

\section{Uniform approximation}\label{s:uni}
Given an infinite (possibly uncountable) set $\mathcal F$  of functions $f:\cX\to\cY$,
we want to find a function that maximizes $I_\infty(f)$. 
Here we first  consider the problem of approximating $I_k(f)$, and then, based on this,  proceed with the problem of approximating $I_\infty(f)$. 

Since we do not know $I_k(f)$, we can select a  function that maximizes the empirical estimate $\hat I_k(f)$. 
The question arises, under what conditions is this procedure consistent? 
The requirements we impose to obtain consistency of this procedure  are of the following  two types: first, the set $\mathcal F$ should be sufficiently 
small, and, second, the time series $(X_i)_{i\in\N}$ should be such that uniform (over $\mathcal F$) convergence
guarantees can be established.  Here the first condition is formalized in terms of VC dimension, and the second 
in terms of mixing times. We show that, under these conditions,  the empirical estimator is  indeed consistent and learning-theory-style 
finite-sample performance guarantees can be established.

\begin{definition}[Estimators]
For a function $f:\cX\to\cY$ and a sample $X_1,\dots,X_n$ define the following estimators:
$\hat p_f(y):={1\over n}\sum_{i=1}^n\I(f(X_i)=y)$, and analogously for $\hat p_f(y_1,\dots,y_k)$,
the multivariate entropies and mutual informations $\hat I_k$, the latter with plug-in estimator $\hat p$ for $p$. The dependence on $n$ is left implicit.
\end{definition}

\begin{definition}[$\beta$-mixing coefficients, e.g.,\cite{Bosq:96}]
For a process distribution $\rho$ define the mixing coefficients
$$
 \beta(\rho,k):=\sup_{\substack{A\in \sigma(X_{-\infty..0}),\\ B\in\sigma(X_{k..\infty})}} |\rho(A\cap B)-\rho(A)\rho(B)|
$$ where $\sigma(..)$ denotes the sigma-algebra of the random variables in brackets.
\end{definition}
When the limit $\lim_{k\to\infty}\beta(\rho,k)$ is $0$, the process $\rho$ is sometimes called absolutely regular; this condition is much stronger than ergodicity, but is much weaker than
the i.i.d.\ assumption.

\def\S{{\mathcal S}}
For a set of indicator functions $\mathcal F$ from $\cX$ to $\{0,1\}$ the symbol $\S(\mathcal F,n)$ is used for the $n$-th shatter coefficient of the 
set~$\mathcal F$:
$$
\S(\mathcal F,n):=\max_{\{x_1,\dots,x_n\}\subset\cX}\#\Big\{ \{i:C(x_i)=1\} :C\in\mathcal F\Big\},
$$
that is, the maximal number of different subsets of $n$ points that can be picked out by the set of indicator functions  $\mathcal F$.
The Vapnik-Chervonenkis (VC) dimension of a set $\mathcal F$ is defined as the maximal integer $d$ such that $S(\mathcal F)=2^d$; see \cite{Vapnik:98,Devroye:96}.

 The general tool that we use to obtain performance guarantees in this section is the following bound
that can be obtained from  \cite[Theorem 3]{Karandikar:02}. Let $\F$ be a set  of VC dimension~$d$ (interpreted as a set of binary functions) and let $\rho$ be a stationary distribution. Then
\begin{multline}\label{eq:mixtl}
 q_n(\rho,\F,\epsilon)
:= \rho\Big(\sup_{g\in\F} |{1\over n}\sum_{i=1}^{n}  g (X_{i})
-\E_{\rho} g (X_1)| >\epsilon\Big)\\\le n\beta(\rho,t_n)+8t_n^{d+1}e^{-l_n\epsilon^2/8},
\end{multline}
where $t_n$ is a parameter and $l_n:=n/t_n$ . %
The parameters $t_n$  should be set according to the values of $\beta$ in order to optimize the bound.

Furthermore,  assume geometric $\beta$-mixing distributions, that is, $\beta(\rho,t)\le\gamma^t$ for some $\gamma<1$.
Letting $\l_n=t_n=\sqrt{n}$ the bound~(\ref{eq:mixtl}) becomes
\begin{equation}\label{eq:mix}
 q_n(\rho,\F,\epsilon)\le n\gamma^{\sqrt{n}}+8n^{(d+1)/2}e^{-\sqrt{n}\epsilon^2/8}=:\Delta(d,\epsilon,n,\gamma).
\end{equation}

Geometric $\beta$-mixing properties can be demonstrated for large classes of (k-order) (PO)MDPs \cite{hernandez:03}, and for many other 
distributions. 

The VC-dimension and the bounds~\eqref{eq:mixtl}, \eqref{eq:mix} above concern sets $\mathcal F$ of binary-valued functions. In order to reduce the case of non-binary spaces $\cY$ to the binary case, we will consider the indicator functions ${\mathbb I}_{\{x\in\cX: g(x)=y\}}:\cX\to\{0,1\}$ that, for each $g$ and each given $y$, take the value $0$ on  $x$ if $g(x)\ne y$ and 1 otherwise. 
\begin{theorem}\label{th:mix}
Let the time series  $(X_i)_{i\in\N}$  be generated by a stationary distribution $\rho$  whose $\beta$-mixing
coefficients satisfy $\beta(\rho,t)\le\gamma^t$ for some $\gamma<1$. Let $\F$  be a set of  functions $f:\cX\to\cY$ 
such that for  each $y\in\cY$ the VC dimension of the set $\{{\mathbb I}_{\{x\in\cX: g(x)=y\}}: g\in\F\}$ is not greater than $d$. 
Then
\begin{multline}\label{eq:thmix}
P( \sup_{g\in\F} | \hat I_k(g)-I_k(g)|>\epsilon) 
\\\le2|\cY|^{k+1}\Delta(7kd,\min\{\epsilon/(6(k+1)|\cY|^{k+1}\log|\cY|),\\h^{-1}(\epsilon/(6|\cY|^{k+1}))\},n-k,\gamma), %
\end{multline}
where $h^{-1}$ stands for the inverse of the binary entropy (and is of order $h^{-1}(\epsilon)\sim\epsilon/\log(1/\epsilon))$.
\end{theorem}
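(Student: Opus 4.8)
The plan is to reduce the uniform control of $\hat I_1(g)-I_1(g)$ over $\F$ to two separate questions: continuity of the entropy functional, and uniform-over-$\F$ convergence of empirical label-frequencies. First I would write $I_1(g)=h_0(g)-h_1(g)$ and use the identity $h_1(g)=h(g(X_0),g(X_1))-h(g(X_0))$ (together with the analogous identity for the plug-in estimator) to express both $I_1(g)$ and $\hat I_1(g)$, up to sign, as sums of at most three plug-in entropies of empirical distributions: the single-symbol marginal of the labels and the joint distribution of a consecutive pair. This removes all ratio estimates of the form $\hat p_g(y_0,y_1)/\hat p_g(y_0)$ and leaves only entropies of empirical probability vectors, each of which is a fixed nonlinear function $H(\cdot)$ of the frequency vector $\hat p_g(y_0,\dots,y_k)$ (here $k=1$). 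It therefore suffices to bound how much $H$ can change when the frequency vector is perturbed, and to bound that perturbation uniformly over $\F$.

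For the continuity step I would use that $P\mapsto H(P)$ on an alphabet of size $m$ has a modulus of continuity with two regimes. Away from the boundary of the simplex the map $-x\log x$ is Lipschitz with constant of order $\log m$, so the contribution of symbols with non-negligible probability is of order $\|\hat p_g-p_g\|_1\log m$; near probability zero $-x\log x$ fails to be Lipschitz and its increments are instead controlled by the binary entropy $h$. Separating these two contributions, allocating $\epsilon/6$ to each over the three entropy terms into which $\hat I_1(g)-I_1(g)$ decomposes, and writing $\delta$ for the uniform per-tuple frequency error $\sup_{g,\,y_0,\dots,y_k}|\hat p_g(y_0,\dots,y_k)-p_g(y_0,\dots,y_k)|$, the Lipschitz contribution is $\le\epsilon/6$ once $\delta\le\epsilon/(6(k+1)|\cY|^{k+1}\log|\cY|)$ (using $\log m\le(k+1)\log|\cY|$ and $\|\hat p_g-p_g\|_1\le|\cY|^{k+1}\delta$), while the boundary contribution is $\le\epsilon/6$ once $|\cY|^{k+1}h(\delta)\le\epsilon/6$, i.e. $\delta\le h^{-1}(\epsilon/(6|\cY|^{k+1}))$. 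Requiring $\delta$ below the minimum of these two thresholds — which is exactly the $\min$ appearing in~(\ref{eq:thmix}) — forces $\sup_{g\in\F}|\hat I_1(g)-I_1(g)|\le\epsilon$.

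For the frequency step I would apply the uniform deviation bound~(\ref{eq:mix}) to each class of tuple indicators $\{\I(g(x_0)=y_0,\dots,g(x_k)=y_k):g\in\F\}$, regarded as $\{0,1\}$-valued functions on the block space $\cX^{k+1}$ of consecutive windows $(X_i,\dots,X_{i+k})$. Two auxiliary facts feed into this. The block process inherits geometric $\beta$-mixing from $\rho$ with the same rate $\gamma$ up to a shift by $k$, which is what produces the reduced sample size $n-k$; and the VC dimension of the tuple-indicator class is controlled by $d$, since each factor $\I(g(x_j)=y_j)$ comes from a class of VC dimension $\le d$ and a Sauer--Shelah-type bound for intersections of $k+1$ such classes gives VC dimension of order $kd$ — this is the source of the constant $7kd$. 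Taking a union bound over the marginal symbols and the $|\cY|^{k+1}$ pair-labels (at most $2|\cY|^{k+1}$ events), and invoking~(\ref{eq:mix}) with accuracy equal to the min-threshold from the previous step, VC dimension $7kd$, sample size $n-k$ and rate $\gamma$, yields exactly the right-hand side of~(\ref{eq:thmix}).

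The main obstacle I expect is the continuity step: obtaining the modulus of continuity of $H$ in the sharp two-term form that simultaneously produces the $\log|\cY|$ threshold and the $h^{-1}$ threshold, since $-x\log x$ is not Lipschitz near $0$ and any crude Lipschitz estimate blows up, making the appearance of $h^{-1}$ genuinely necessary rather than cosmetic. The remaining pieces — reducing the conditional entropy to plug-in entropies of empirical marginal and pair distributions, verifying that the block process stays geometrically mixing, and the combinatorial VC-dimension bound for the tuple-indicator classes — are technical rather than conceptual, and should follow routinely once the entropy-continuity estimate is in place.
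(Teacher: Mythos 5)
Your proposal follows essentially the same route as the paper's proof: reduce the problem to uniform total-variation control of the empirical tuple frequencies via the mixing/VC bound~(\ref{eq:mix}) applied to the block process of consecutive windows (mixing coefficients shifted by $k$, VC dimension $7kd$, union bound over the $|\cY|^{k+1}$ tuple labels), and then transfer to $I_1$ through a two-term continuity modulus of the information functional, splitting the accuracy into the Lipschitz-in-$\log|\cY|$ regime and the binary-entropy regime, which yields exactly the $\min$ in~(\ref{eq:thmix}). The only (minor) divergence is that you propose to derive the continuity estimate yourself via a Fannes-type bound on the three plug-in entropies, whereas the paper cites the ready-made inequality $|I-\hat I|\le 3(k+1)\alpha_g\log|\cY|+3h(\alpha_g)$ from \cite{Zhang:07}; the mathematical content and the resulting thresholds are the same.
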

The proof is deferred to the Appendix. %

We proceed to construct an estimator of $I_\infty(g)$ which is uniformly consistent over a set $\mathcal F$ of functions $g$, provided
the time series satisfies mixing conditions.
To this end, denote $\delta_k(n,\epsilon)$ the right-hand side of~(\ref{eq:thmix}). 
Assuming some monotonically non-decreasing sequence of integers $k_n$, 
 define
\begin{equation}\label{eq:hinf}
 \hat I_\infty(g):=\sum_{k=1}^{k_n}w_k\hat I_{k}(g).
\end{equation}
Observe that for each fixed $k\in\N$, $\delta_k(n,\epsilon)$ decreases exponentially fast with $\sqrt{n}$. 
Therefore, it is possible to find a non-decreasing sequence $k_n:n\in\N$
such that $\delta_{k_n}(n,\epsilon)$ decreases as $\exp^{-\Omega(\sqrt{n})}$ with $n$ (up to polynomial factors), while $k_n\to\infty$; for example, one can take $k_n:=\log n$.
\begin{theorem}Under the conditions of Theorem~\ref{th:mix}, we have, with the choice of  $k_n$ with the asymptotic behaviour as described (e.g., $k_n=\log n$),
\begin{equation}\label{eq:cor1}
 P( \sup_{g\in\F} | \hat I_\infty(g)-I_\infty(g)|>\epsilon) \le k_n\delta_{k_n}(n,\epsilon/2),
\end{equation}
 provided $n$ is large enough to satisfy  $\sum_{i>k_n} w_i < \epsilon/2$. In this case,
$$
\sup_{g\in\mathcal F}|\hat I_\infty(g)-I_\infty(g)|\to0\as
$$

 \end{theorem}
\begin{proof}
 The first statement follows from Theorem~\ref{th:mix} by using the condition  $\sum_{i>k_n} w_i < \epsilon/2$ and the union bound: 
\begin{multline*}
 P\left( \sup_{g\in\F} | \hat I_\infty(g)-I_\infty(g)|>\epsilon\right) \\ \le 
 P\left( \sup_{g\in\F} \sum_{i=1}^{k_n}  w_i|\hat I_i -I_i(g)|>\epsilon/2\right)
\\\le \sum_{i=1}^{k_n} P\left( \sup_{g\in\F}  |\hat I_i -I_i(g)|>\epsilon/2\right) \\\le k_n\delta_{k_n}(n,\epsilon/2).
\end{multline*}

 The second statement follows from the first, using a sequence of $\epsilon$ slowly decreasing with $n$ and the Borel-Cantelli lemma.
\end{proof}

\section{The active case: MDPs without rewards}\label{s:cont}
In this section we introduce learner's actions into the protocol. 
The setting is a sequential interaction between a  learner and an environment.
Given are a space of observations $\cX$ and   a space of actions $\cA$, where $\cA$ is assumed finite.
 At each time step $i\in\N$ the environment provides 
an observation $X_i$, the learner takes an action $A_i$, then the next observation $X_{i+1}$ is provided, and so on.
Each next observation $X_{i+1}$ is generated according to some (unknown) probability distribution
$P(X_{i+1}| X_0,A_0,\dots,X_i,A_i)$.
Actions are generated by a probability distribution $\pi$ that is called a {\em policy}; in general, it has the form 
$\pi(A_{i+1}| X_0,A_0,\dots,X_i,A_i, X_{i+1})$, for all $i\in\N$. 

Note that  we do not introduce  costs or rewards into consideration.
Thus, we are dealing with an unsupervised version of the common reinforcement-learning problem;  the goal is just to find 
a concise representation that preserves the dynamics of the process.

The focus in this section 
is  on time-homogeneous Markov environments, that is, on Markov Decision Processes (MDPs) without rewards. 
Thus, we assume that $X_{i+1}$  only depends on $X_i$ and $A_i$, and this dependence is constant in $i$. This means that 
$P$ can be identified with a  function $p$ from $\cX\times\cA$ to the space $\mathcal P(\cX)$ of probability distributions on $\cX$ 
\begin{multline}\label{eq:mdp}
 P(X_{i+1}\in T| X_0,A_0,\dots,X_{i-1},A_{i-1},X_i=x,A_i=a)\\=p_{x,a}(X_{i+1}\in T) \as
\end{multline} 
for all $T\in\mathcal F_{\mathcal X}$. When $x,a$ are random, e.g., $X_0,A_0$, we will use the notation $p(X_1|X_0,A_0)$ for $p_{X_0,A_0}(X_1)$, in order to make explicit the ``time'' order of the variables in the sequence $X_0,A_0,X_1,A_1,\dots$. The notation is justified if we return back to~\eqref{eq:mdp} to see that it corresponds to $P(X_{1}| X_0,A_0)$.

In the Markov case, the observations  $X_i$ are called {\em states} and the function $p$ the {\em transition probability} function.

A policy is called {\em stationary} if each action only depends on the current state;
that is,  $\pi(A_{i+1}| X_0,A_0,\dots,X_i,A_i, X_{i+1}=x)=\pi(A_{i+1}|x)$  where, for each $x\in\cX$,
$\pi(\cdot|x)$ is a distribution over $\cA$. %

\begin{definition}[Admissible MDPs, $P^\pi,\E^\pi, I^\pi_k$, etc.]
 Call an MDP {\em admissible} if any stationary policy $\pi$ has a (unique up to  sets of measure 0)
 stationary distribution  over states. Denote (any) such distribution $P^\pi$. Moreover, the notation $\E^\pi, h^\pi, I^\pi_k$, etc.\ refers to 
the stationary distribution of the policy~$\pi$. In particular, with this notation 
\begin{multline*}
  I^\pi(f(X_0),A_0;f(X_1),A_1)\\= h(f(X_0),A_0)-h(f(X_1),A_1|f(X_0),A_0)
\end{multline*}
where the actions $A_i$ are distributed according to $\pi$ and  $h(f(X_0),A_0)$ (respectively, $h(f(X_1),A_1|f(X_0),A_0)$) is the (conditional) entropy of the pair.
\end{definition}

Thus, for an MDP with transition function $p$ and a stationary policy, the following decomposition holds true:
\begin{multline}\label{eq:decomp}
  P^\pi(X_0,A_0,\dots,X_n,A_n) \\
=P^\pi(X_0)\pi(A_0|X_0)\prod_{i=1}^{n}p(X_{i}|X_{i-1},A_{i-1})\pi(A_{i}|X_{i}). %
\end{multline}

\begin{definition}[conditional independence, active case] 
For a policy $\pi$, an environment $P$ and a measurable function $f$ we say that  $(X_i)_{i\in\N}$ are
 conditionally independent given $(f(X_i))_{i\in\N}$ {\em under the policy $\pi$} if %
\begin{multline}\label{eq:ciida}
   P^\pi(X_n|f(X_n),A_n,X_{i_1},A_{i_1},\dots,X_{i_k},A_{i_k})\\=P^\pi(X_n|f(X_n))\as
\end{multline}
for all $n,k\in\N$, and all $i_1,\dots,i_k\in\N$ such that $i_j\ne n$, $j=1..k$. 
\end{definition}
This definition implies that the actions of the policy $\pi$ depend on $X_n$ only through the representations $f(X_n)$; in other words, the policy ``knows'' the representation.

\begin{lemma}\label{l:decomp}
For an admissible MDP $P$,  a stationary policy $\pi$ and a representation function $f$, 
if $X_i,i\in\N$ are conditionally independent 
given $f(X_i),i\in\N$ for the policy $\pi$, then
\begin{multline}\label{eq:magmdp}
 \pi(A_i|X_i)=\pi(A_i|f(X_i))\as\text{ and }\\
 p(X_{i+1}|X_i,A_i)=p(X_{i+1}|f(X_i),A_i)\as;
\end{multline}
in other words, the policy $\pi$ as a function $\pi(\cdot|x)$ from $x\in\cX$ to distributions over $\mathcal A$ applied to $X_i$ is $\sigma(f(X_i))$-measurable, and the function $p_{x,a}(\cdot)$ applied to $(X_i,A_i)$ is $\sigma((f(X_i),A_i))$-measurable, where $\sigma()$ stands for the $\sigma$-algebra generated by the random variable in brackets.

Moreover, $X_i,i\in\N$ are conditionally independent 
given $f(X_i),i\in\N$ for the policy $\pi$ if and only if, for all $n\in\N$,  there exist probability distributions $q_a$, $a\in\cY$ over $(\cX,\mathcal F_\cX)$ such that the following decomposition holds true
\begin{multline}\label{eq:decomp2}
  P^\pi(X_0,A_0,\dots,X_n,A_n)
\\=P^\pi(f(X_0))q_{f(X_0)}(X_0)\pi(A_0|f(X_0)) \hfill
\\\times\prod_{i=1}^{n}p(f(X_{i})|f(X_{i-1}),A_{i-1})q_{f(X_i)}(X_i)\pi(A_i|f(X_i))%
\end{multline}

\end{lemma}
\begin{proof}
 The first statement follows from the symmetry of conditional independence; %
its  derivation is analogous to that of~\eqref{eq:mag} in Lemma~\ref{l:prop}.

For the second statement, to show the  ``only if'' part (assuming conditional independence), note that
\begin{multline}
  P(X_{i+1}|f(X_i),A_i)\\
 =P(f(X_{i+1})|f(X_i),A_i)P(X_{i+1}|f(X_{i+1}),f(X_i),A_i))\\=P(f(X_{i+1})|f(X_i),A_i)P(X_{i+1}|f(X_{i+1}))\as,
\end{multline}
where we have used conditional independence in the last equation. By stationarity, the last term is constant in $i$, so we can introduce $q_a(T):=P(X_0\in T|f(X_0)=a)$ for $T\in\mathcal F_\cX$, $a\in\mathcal A$.
Now the statement follows from~\eqref{eq:decomp} and the first statement.

Conversely, assume that~\eqref{eq:decomp} holds, and let us show that $X_i$ are conditionally independent given $f(X_i),i\in\N$.  The equality 
\begin{equation}
P^\pi(X_1|f(X_1),X_0,A_0,A_1,X_2,A_2)= P^\pi(X_1|f(X_1))\as
\end{equation}
follows by expanding the conditional distribution and applying~\eqref{eq:decomp2}. The general case of~\eqref{eq:ciida} reduces to this because of stationarity and the Markov property.
\end{proof}

Observe that if, for a stationary policy $\pi$, $X_i$ are conditionally independent given $f(X_i)$, then  for the stationary time series $(X_i,A_i)_{i\in\N}$ we can say that $(X_i,A_i)_{i\in\N}$ are conditionally independent given $(f(X_i),A_i)$, $i\in\N$. 
This means that one can apply Theorem~\ref{th:mark} to these series. The result is the following statement, in which the function maximized by the function $f$ is somewhat simplified due to the additional conditional independence properties coming from~\eqref{eq:decomp2}.

\begin{corollary}\label{th:mdp}
 For an admissible MDP $P$, a  stationary policy $\pi$ and a function $f:\cX\to\cY$, if  $(X_i)_{i\in\N}$ are
 conditionally independent given $(f(X_i))_{i\in\N}$ {\em under the policy $\pi$}, then:
\begin{itemize}
 \item[(i)]  $(f(X_i))_{i\in\N}$ also form a stationary MDP (without rewards) with the policy $\pi$,
\item[(ii)] the function $f$ maximizes the following quantity
 \begin{equation}\label{eq:imarm}
  \Ia^\pi_1(f):=I^\pi(f(X_1);f(X_0),A_0),
\end{equation}
 that is,  for any $g:\cX\to\cY$ we have $\Ia^\pi_1(f)\ge \Ia^\pi_1(g)$ with equality if and only if  $(X_i)_{i\in\N}$ are conditionally independent given $(g(X_i))_{i\in\N}$.
\end{itemize} 
\end{corollary}
\begin{proof}
The first statement follows from the first statement  of Theorem~\ref{th:mark} (applied to the time series $(X_i,A_i)$, $i\in\N$). For the second statement, note that from the second statement of Theorem~\ref{th:mark} we have that $f$ maximizes  $I^\pi_1(g(X_0),A_0;g(X_1),A_1)$, with equality  $$I^\pi(g(X_0),A_0;g(X_1),A_1)=I^\pi(f(X_0),A_0;f(X_1),A_1)$$ reached for a function $g$ if and only if  $((X_i)_{i\in\N},A_i)$ are conditionally independent given $((g(X_i),A_i)_{i\in\N}$ under policy~$\pi$.
Moreover, 
\begin{multline}\label{eq:ccc}
 I^\pi(f(X_0),A_0;f(X_1),A_1)
 \\= h(f(X_0),A_0)-h(f(X_1),A_1|f(X_0),A_0)
 \\=h(f(X_0))+h(A_0|f(X_0)) \\- h(f(X_1)|f(X_0),A_0)-h(A_1|f(X_1),f(X_0),A_0)
\\=h(f(X_0))+h(A_0|f(X_0)) \\ - h(f(X_1)|f(X_0),A_0)-h(A_1|f(X_1))
\\=h(f(X_1))- h(f(X_1)|f(X_0),A_0)\\=\Ia^\pi_1(f),
\end{multline}
where the first equality is by definition and stationarity, the second is the chain rule for entropy, the third follows from~\eqref{eq:magmdp} (cf.~\eqref{eq:magic}), the fourth by stationarity and the last equality is by  Definition~\eqref{eq:imarm}.
 
Furthermore, for any other function $g:\cX\to\cY$, note that  the third equality in~\eqref{eq:ccc} becomes inequality ($\ge$), since 
$h(A_1|g(X_1),g(X_0),A_0)\le h(A_1|g(X_1))$. The statement about the equality $\Ia^\pi_1(f)=\Ia^\pi_1(g)$ follows from the corresponding statement in  Theorem~\ref{th:mark}.
\end{proof}

Thus, in the ideal situation, when there exists a function  $f$ such that   $(X_i)_{i\in\N}$ are conditionally 
independent given $(f(X_i))_{i\in\N}$, there is a (finite) hidden state space $\cY$ and the transitions depend only on the hidden state. 
The hidden states $y_i\in\cY$ are connected to the observable  states $x_i\in\cX$ via the representation function $f$. 
The question is how to find this function $f$. The problem is that the policy (or rather, one of the policies) with which the conditional independence is achieved has to depend on $X_i$ only through $f(X_i)$, which we do not know. 
To avoid this problem, we can simply use a policy that does not depend on anything, i.e.\ a random policy. The resulting process is stationary, and we already know how to find the representation function for a stationary process. What remains to show is that the representation function does not depend on the policy, so the representation function that we would find executing a random policy is the same one we are looking for.

\begin{definition}[Random policies, connected MDPs]
Call a stationary policy $\pi$ {\em random} if $\pi(a|x)$ does not depend on $x$ and $\pi(a)>\alpha>0$ for every  $a\in\cA$.

Furthermore, call an admissible MDP {\em (weakly) connected} if there exists  a stationary policy $\pi$ such that
(equivalently:
for every random policy $\pi$)  for any other stationary policy $\pi'$ we have $P^\pi \gg P^{\pi'}$, that is, for any measurable $S\subset\cX\times\cA$,  
 $P^{\pi'}(X_0\in S)>0$ implies $P^\pi(X_0\in S)>0$. In such a case, the policy $\pi$ is called {\em exploring}. %
\end{definition}

For discrete MDPs this definition coincides with the usual definition of weak connectedness (for any pair 
of states $s_1,s_2$ there is a policy that gets from $s_1$ to $s_2$ in a finite number of steps with non-zero probability).

\begin{proposition}\label{th:mdp2} Fix an admissible %
MDP and a random policy $\pi_0$. If, for some exploring policy $\pi$ and a representation function $f$, $X_i$ are conditionally independent given $f(X_i)$ with the policy $\pi$, then also $X_i$ are conditionally independent given $f(X_i)$ with the policy $\pi_0$.
\end{proposition}
 \begin{proof}
For the random policy $\pi_0$,  we obtain
\begin{multline}\label{eq:pr}
   P^{\pi_0}(X_0,A_0,\dots,X_n,A_n)
\\=P^{\pi_0}(X_0)\pi(A_0|X_0)\prod_{i=1}^{n}p(X_{i}|X_{i-1},A_{i-1})\pi(A_i|X_i) \\
=P^{\pi_0}(X_0)\pi_0(A_0)\prod_{i=1}^{n}p(X_{i}|X_{i-1},A_{i-1})\pi_0(A_i) \as%
\end{multline}
where the first equality is from~\eqref{eq:decomp} and the second uses the independence of actions under $\pi_0$. %
Moreover, under the policy  $\pi$ we have from~\eqref{eq:magmdp}
\begin{equation}\label{eq:ppp}
 p(X_{i+1}|X_i,A_i)=q_{f(X_{i+1})}(X_{i+1})p(f(X_{i+1}|f(X_i),A_i)\as
\end{equation}
Note, however,  that the transition function $p$ in the last equation  does not  depend on the policy, that is, it is the same for  $\pi$ and $\pi_0$. 
The only thing that depends on the policy is the ``almost sure'' assertion in the end. In~\eqref{eq:ppp} it is with respect to the policy $\pi$ and, to continue~\eqref{eq:pr} we need it with respect to $\pi_0$. However, since the policy $\pi$ is exploring, the distribution $P^\pi$ dominates $P^{\pi_0}$ (by definition), so that~\eqref{eq:ppp} holds a.s.\ with respect to $\pi_0$ as well.
Therefore, we can continue~\eqref{eq:pr} to obtain
\begin{multline}
   P^{\pi_0}(X_0,A_0,\dots,X_n,A_n) \\
 =P^{\pi_0}(X_0)\pi_0(A_0)\prod_{i=1}^{n}p(X_{i}|f(X_{i-1}),A_{i-1})\pi_0(A_i)
\\=P^{\pi_0}(f(X_0))q_{f(X_{0})}(X_{0})\pi_0(A_0)
\\\prod_{i=1}^{n}p(f(X_{i})|f(X_{i-1}),A_{i-1})q_{f(X_{i})}(X_{i})\pi_0(A_i),
\end{multline}
which has the form~\eqref{eq:decomp2} for $\pi_0$. It remains to apply Lemma~\ref{l:decomp} (second statement) to conclude that $X_i$ are conditionally independent given $f(X_i), i\in\N$ under $\pi_0$.
\end{proof}

\begin{corollary}\label{th:mdpc}
 Fix an admissible weakly connected MDP $P$ and a random policy $\pi_0$. Suppose that there are no redundant actions in the set $A$, i.e.\ there are no two actions $a_1,a_2$ such that the distributions $p_{x,a_1}()$, $p_{x,a_2}()$ are the same for almost all $x\in\cX$.  Furthermore, assume that there exists an exploring policy $\pi$ and a function $f$ such that $X_i$ are conditionally independent given $f(X_i)$ under the policy $\pi$, and $\cY$ is the minimal representation set. Then $f=\argmin_f \Ia_1^{\pi_0}(f)$, that is, the representation that minimizes $\Ia$ for the random policy $\pi_0$ is the same as the one for $\pi$, and such function $f$ is unique (up to a change of notation).
\end{corollary}
\begin{proof}
 By Proposition~\ref{th:mdp2}, $X_i$ are conditionally independent given $f(X_i)$ under the random policy $\pi_0$. Therefore, we can say the same for the stationary time series $(X_i,A_i)$: they are conditionally independent given $(f(X_i),A_i)$.  Since $\cY$ is the minimal representation set and no actions are redundant, $\cY$ is also the minimal representation set for $(X_i,A_i)$ under $\pi_0$. Therefore, the representation function $f$ is unique up to permutations. Since this holds  for both policies  $\pi$ and $\pi_0$, the statement follows.
\end{proof}

Thus, we get the following recipe for finding a representation for an MDP: minimize $\Ia^{\pi_0}(f)$ while executing a random policy $\pi_0$.  If there is a representation function $f$ such that $X_i$ are conditionally independent given $f(X_i)$ with that function under some exploring policy $\pi$, then this is the function we will find executing the random policy $\pi_0$; an additional requirement that we have to impose for this to hold is that no actions are redundant.

Note that the requirement that the policy $\pi$, for which conditional independence holds, is exploring, may appear  rather strong, but it is necessary. Indeed, otherwise, there may be some parts of the space $\cX$ on which conditional independence does not hold (for any representation function), but the random policy will take us there. However, this condition is actually not that strong, since we only require that such an exploratory  policy exists; it is not a requirement on an ``optimal'' policy, since there are no rewards in this setting.

\section{Discussion and directions for future work}
 The big question  addressed in this work is: what makes for a good representation function?
This question is very general and rather challenging, and, to the author's knowledge, is a new one in the context of dependent processes.  We have  argued that a good measure of the quality of a representation function is the time-series information. The main argument for this is that, in the ``ideal'' situation, the representation function that gives conditional independence  maximizes  time-series information.   The next question is how to find a representation function that maximizes this quantity. Section~\ref{s:uni} shows that, under some conditions, it is enough to maximize the empirical time-series information. 
To understand this result better, a helpful analogy is with the  problem of classification (e.g., \cite{Vapnik:98,Devroye:96}). In the latter problem, it is intuitively clear that a good measure of  quality of a classifier is its expected error; so, unlike for representation functions, the question of what is a good classifier is easy to answer. In order to minimize the expected error, one can show that, under some conditions, it is enough to minimize the empirical error.  Note that the convergence of the empirical error does not imply the convergence of classifiers themselves. But this is not necessary, since all one cares for is the expected error, and the difference between the empirical and expected error can be bounded. 

Similarly, if one agrees that the time-series information is an adequate measure to evaluate the quality of a representation function, then it does not matter whether representation functions that minimize the empirical version converge in some (other) sense.
This said, in view of Proposition~\ref{th:one} (uniqueness of representation),  it may be possible to establish some convergence of representation functions that minimize $\hat I_k$ (at least under some conditions). This question is left for future work.

Among the most interesting directions for future work are the implications of the results presented here for different learning problems. In particular, for the control problem it would be interesting to see what happens when one adds an (arbitrary) reward function once the (approximate) representation function has been found; specifically, how the approximation error from learning the representation propagates.
For the classification problem, if one adds some labels $Y_i$ to the dependent sequence of objects $X_i$ studied  in this work, then both classification methods and the representation learning method presented can be used to address the same problem: find the unobserved labels. It would be interesting to study how these methods can be combined to complement each other. As a first step, \cite{Ryabko:06condiid} shows that, at least, in the ``ideal'' situation time-series information does not hurt and classification methods developed to work under the i.i.d.\ assumption may be used.

Another direction for future work concerns generalizations. The first interesting generalization is to continuous spaces $\cY$. More broadly, it would be interesting to study whether similar results can be obtained for data structures more general than time series, such as (infinite) graphs. Towards this end, one can note that a number of results on stationary time series generalize to stationary infinite random graphs, as shown in \cite{Ryabko:17gratest} using the formalism of \cite{Benjamini:12}.
\subsection*{Acknowledgments}
The author is grateful to the anonymous reviewer of the journal version for numerous useful suggestions and important corrections.
\section*{Appendix}%
\begin{proof}[Proof of Theorem~\ref{t:m}]
Consider the  following entropies and  information
$h_0(f,g)$,   $h_k(f,g)$, $I_k(f,g)$ and $I_\infty(f,g)$, 
 defined (in a straightforward manner)	 for the vector-valued function $(f(\cdot),g(\cdot))$ with components $f$ and $g$.
We will first show that 
\begin{equation}\label{eq:ik}
 I_k(f,g)=I_k(f)\text{  and }I_\infty(f,g)=I_\infty(f). 
\end{equation}
The latter equality follows from the former and the definition of $I_\infty$.
Introduce the short-hand notation $Y_i:=f(X_i), Z_i:=g(X_i)$, $i\in\N$.
First note that
\begin{equation}\label{eq:h0fg}
 h_0(f,g)=h(Y_0)+h(Z_0|Y_0).
\end{equation}
 Moreover, 
\begin{multline}\label{eq:h1fg}
h_k(f,g)= 
h(Y_0,Z_0|Y_{-k..-1},Z_{-k..-1})
\\=h(Y_0|Y_{-k..-1},Z_{-k..-1})+h(Z_0|Y_0,Y_{-k..-1},Z_{-k..-1})
\\
= h(Y_0|Y_{-k..-1})+h(Z_0|Y_0)%
\end{multline}
where the first equality is by definition, the second is the chain rule for entropy and  the third follows from~(\ref{eq:magic}) and 
 conditional independence of $X_i$ given $f(X_i)$. 
Thus, from~(\ref{eq:h0fg}), (\ref{eq:h1fg}) and the definition of $I_k(f)$ we get
\begin{multline*}
I_k(f,g)=h_0(f,g)- h_k(f,g)\\=h(Y_0)+h(Z_0|Y_0)- h(Y_0|Y_{-k..-1})-h(Z_0|Y_0)=I_k(f)
\end{multline*}
finishing the proof of~(\ref{eq:ik}).

From~\eqref{eq:inf}, noting that removing random variables does not increase information, we have 
\begin{equation}\label{eq:ik2}
 I_k(f,g)\ge I_k(g), 
\end{equation}
so that using~\eqref{eq:ik} we have 
\begin{equation}\label{eq:ik3}
 I_k(f)\ge I_k(g)\text{ and }I_\infty(f)\ge I_\infty(g). 
\end{equation}

To prove the theorem, it remains to show that, if $(X_i)_{i\in\N}$ are not conditionally independent given $(g(X_i))_{i\in\N}$,  then $I_\infty(f)> I_\infty(g)$.
Since we already have~\eqref{eq:ik3}, 
 it  is enough to show that the inequality %
\begin{equation}\label{eq:go2}
 I_k(f)> I_k(g)
\end{equation}
holds for some $k$; in fact, we will show that it holds from some $k$ on. 

Assume that  $(X_i)_{i\in\N}$ are not conditionally independent given $(g(X_i))_{i\in\N}$, so that
\begin{equation}\label{eq:ne2}
 P(X_{n}| g(X_{n}),X_{i_1},\dots,X_{i_l})\ne P(X_{n}| g(X_{n}))
\end{equation}
for some $l,n$ and $i_1,\dots,i_l\ne n$ on a positive-measure set.
Adding extra variables if necessary and using stationarity, we can rewrite
\begin{equation}\label{eq:ne22}
 P(X_{0}| g(X_{0}),X_{1},\dots,X_{k},X_{-1},\dots,X_{-k})\\\ne P(X_{0}| g(X_{0}))
\end{equation}
for some $k\in\N$  on a positive-measure set.
Note that if~(\ref{eq:ne22}) holds for $k\in\N$ then it also holds  for all $k'>k$.

Recall the notation $Y_i:=f(X_i)$, $Z_i:=g(X_i)$. %
Using this notation and the symmetry of conditional independence, %
 from (\ref{eq:ne22}) we obtain, on a set of positive measure,
\begin{equation}\label{eq:ne223}
 P(X_{1..k},X_{-k..-1}| X_{0}, Z_0)\ne   P(X_{1..k},X_{-k..-1}|  Z_0).
\end{equation}
Moreover,  applying the  Bayes rule, and using conditional independence  (as in the derivation of \eqref{eq:mag}) ,  we can conclude from~\eqref{eq:ne223}
that (on a set of positive measure) 
\begin{equation}\label{eq:ne2233}
 P(Y_{1..k},Y_{-k..-1}| X_{0}, Z_0)\ne   P(Y_{1..k},Y_{-k..-1}|  Z_0);
\end{equation}
the precise argument is as follows: assume the contrary and derive
\begin{multline*}
  P(X_{1..k},X_{-k..-1}| X_{0}, Z_0)
\\= \frac{P(X_{0}, Z_0|X_{1..k},X_{-k..-1})P(X_{1..k},X_{-k..-1})}{P(X_{0}, Z_0)}
\\
=  \frac{P(X_{0}, Z_0|Y_{1..k},Y_{-k..-1})P(X_{1..k},X_{-k..-1})}{P(X_{0}, Z_0)} 
\\= 
 \frac{P(Y_{1..k},Y_{-k..-1}|X_{0}, Z_0)P(X_{1..k},X_{-k..-1})}{P(Y_{1..k},Y_{-k..-1})}
\\
= \frac{P(Y_{1..k},Y_{-k..-1}| Z_0)P(X_{1..k},X_{-k..-1})}{P(Y_{1..k},Y_{-k..-1})} 
\\= 
\frac{P(Z_0|Y_{1..k},Y_{-k..-1})P(X_{1..k},X_{-k..-1})}{P(Z_0)}
\\=\frac{P(Z_0|X_{1..k},X_{-k..-1})P(X_{1..k},X_{-k..-1})}{P(Z_0)}
\\ = P(X_{1..k},X_{-k..-1}|Z_0),
\end{multline*}
where we have used the Bayes rule, \eqref{eq:mag}, again the Bayes rule, the assumption we are trying to disprove, then again the Bayes rule and \eqref{eq:mag}; arriving at a contradiction with~\eqref{eq:ne223} and establishing~\eqref{eq:ne2233}.

Since $Z_0$ is a function of $X_0$, from~\eqref{eq:ne2233} we obtain
\begin{multline}\label{eq:h2233}
 h(Y_{1..k},Y_{-k..-1}| X_{0})
= h(Y_{1..k},Y_{-k..-1}| X_{0}, Z_0)
\\ <   h(Y_{1..k},Y_{-k..-1}|  Z_0), 
\end{multline}
which, together with~\eqref{eq:magic}, implies
\begin{equation}\label{eq:h2234}
 h(Y_{1..k},Y_{-k..-1}| Y_0) <   h(Y_{1..k},Y_{-k..-1}|  Z_0).
\end{equation}
Using the chain rule for entropy, we obtain 
\begin{multline}\label{eq:h2235}
 h(Y_{-k..-1}| Y_0)+h(Y_{1..k}| Y_{-k..-1}, Y_0) 
\\< h(Y_{-k..-1}| Z_0)+h(Y_{1..k}| Y_{-k..-1}, Z_0),
\end{multline}
so that  at least one of the following two inequalities must hold 
\begin{equation}\label{eq:h2235-1}
 h(Y_{-k..-1}| Y_0)< h(Y_{-k..-1}| Z_0)
\end{equation}
or
\begin{equation}\label{eq:h2235-2}
 h(Y_{1..k}| Y_{-k..-1}, Y_0) <h(Y_{1..k}| Y_{-k..-1}, Z_0).
\end{equation}
Assume~\eqref{eq:h2235-1}; then
\begin{multline}\label{eq:der}
 I_k(f)=I(Y_0;Y_{-k..-1})=I(Y_{-k..-1};Y_0)
\\=h(Y_{-k..-1})-h(Y_{-k..-1}|Y_0) 
\\ > h(Y_{-k..-1}) - h(Y_{-k..-1}| Z_0)
 \\=I(Y_{-k..-1};Z_0)= I(Z_0;Y_{-k..-1})
\\\ge I(Z_0;Z_{-k..-1})=I_k(Z),
\end{multline}
where the last inequality uses~\eqref{eq:magic3}.
Next, assume~\eqref{eq:h2235-2}; then, using the chain rule for entropy, we obtain
\begin{multline*}
 \sum_{t=1}^kh(Y_t| Y_{-k..-1}, Y_0,Y_{1..t-1})\\ <\sum_{t=1}^kh(Y_t| Y_{-k..-1}, Z_0,Y_{1..t-1}),
\end{multline*}
 so that there must exist $t\in\{1..k\}$ such that
\begin{equation}\label{eq:h2235-22}
 h(Y_t| Y_{-k..-1}, Y_0,Y_{1..t-1}) < h(Y_t| Y_{-k..-1}, Z_0,Y_{1..t-1}).
\end{equation}
Take such a $t$; then  we derive (similarly to how \eqref{eq:der} was derived)
\begin{multline*}
I_{k+t}(f)=I(Y_t;Y_{-k..-1},Y_0,Y_{1..t-1})
\\=h(Y_t)-h(Y_t|Y_{-k..-1},Y_0,Y_{1..t-1})
\\ > h(Y_t)-h(Y_t|Y_{-k..-1},Z_0,Y_{1..t-1}) 
\\= I(Y_t;Y_{-k..-1},Z_0,Y_{1..t-1}) \\
\ge I(Y_t;Z_{-k..-1},Z_0,Z_{1..t-1}) \\ \ge  I(Z_t;Z_{-k..-1},Z_0,Z_{1..t-1})= I_{k+t}(g),
\end{multline*}
where the first inequality is from~\eqref{eq:h2235-22} and the last two inequalities follow from~\eqref{eq:magic3}.

Thus, in either case, $I_k(f)>I_k(g)$ for some $k$ (and from some $k$ on), proving the statement.
\end{proof}

\begin{proof}[Proof of Theorem~\ref{th:mix}]
Introduce the shorthand notation 
$$p_g(y_{0..k}):=P(g(X_0)=y_0,\dots,g(X_k)=y_k).$$
Define  the total variation distance between $p_g$ and its empirical estimate $\hat p_g$ as  $\alpha_g:=\sum_{y_i\in\cY,i=0..k}|p_g(y_{0..k})-\hat p_g(y_{0..k})|$.
Observe that, from the definition of mixing,  if a process $\rho$ generating  $X_0,X_1,X_2,\dots$ is mixing with coefficients $\beta(\rho,m)$
then the process made of tuples  $(X_0,\dots,X_k),(X_1,\dots,X_{k+1}),\dots$ is mixing with coefficients $\beta(\rho,m-k)$.
Next, for the VC dimensions, observe that  if, for every fixed $y\in\cY$,  the set 
$$\{\{x: g(x)=y\}:g\in\F\},$$ considered as a set of indicator functions (recalling the explanation preceding the theorem formulation), has VC dimension bounded by
 $d$  then the set 
 $$
    \{ \{ (x_1,\dots,x_k):   g_i(x_i)=y_i,i=1..k+1\} : (g_1,\dots,g_k)\in\F^k \}
$$  
has VC dimension bounded by $7kd$ (for all $(y_1,\dots,y_k)\in\cY^k$); see \cite{Vaart:09}, which also 
gives a more precise bound.
Thus, from~\eqref{eq:mix} we obtain 
\begin{equation}\label{eq:supalf}
 P(\sup_{g\in\mathcal F} \alpha_g >\epsilon)\le|\cY|^{k+1}\Delta(7kd,\epsilon/|\cY|^{k+1},n-k,\gamma).
\end{equation}

We will use the following bound from \cite[Theorem 2]{Zhang:07} that relates the difference between mutual information  to the total variation between the 
corresponding distributions of two pairs of random variables:
\begin{multline}\label{eq:wt}
 \Big|I(g(X_0),\dots,g(X_{k-1});g(X_k)) \\ -\hat I(g(X_0),\dots,g(X_{k-1});g(X_k))\Big|
\\
\le 3(k+1)\alpha_g\log|\cY|+3h(\alpha_g),
\end{multline}
where $h$ stands for the binary entropy.
Thus, 
\begin{multline}
 P\Big(\sup_{g\in\mathcal F}|I(g(X_0),\dots,g(X_{k-1});g(X_k))
  \\-\hat I(g(X_0),\dots,g(X_{k-1});g(X_k))|>\epsilon\Big) \\
 \le P\left(\sup_{g\in\mathcal F}\alpha_g\ge\epsilon/(6(k+1)\log|\cY|)\right)
 \\+P\left(\sup_{g\in\mathcal F}h(\alpha_g)>\epsilon/6\right)
\\
 \le|\cY|^{k+1}\Delta\left(7kd,\epsilon/(6(k+1)|\cY|^{k+1}\log|\cY|),n-k,\gamma\right) \\+  |\cY|^{k+1}\Delta(7kd,h^{-1}(\epsilon/6)/|\cY|^{k+1}),n-k,\gamma),
\end{multline}
where in the first inequality we used~\eqref{eq:wt} and in the second \eqref{eq:supalf}  for each summand (inverting the binary entropy for the second one) and the fact that $h$ is monotone increasing on $[0,1/2]$. The statement of the theorem follows.
\end{proof}

%
%
%
%
%
%
%
%
%
%
%
%
%
%
%
%
%
%
%
%
%
%
%
%
%
%
%
%
%
%
%
%
%

%

%
%
%


\end{document}